\def\BibTeX{{\rm B\kern-.05em{\sc i\kern-.025em b}\kern-.08emT\kern-.1667em\lower.7ex\hbox{E}\kern-.125emX}}
\begin{document}

%
\title{Sample Adaptive Multiple Kernel Learning for Failure Prediction of Railway Points}

\author{Zhibin Li}
\affiliation{%
  \institution{University of Technology Sydney}
  \streetaddress{Broadway}
  \city{Sydney}
  \country{Australia}}\email{Zhibin.Li@student.uts.edu.au}

\author{Jian Zhang}
\affiliation{%
  \institution{University of Technology Sydney}
	\streetaddress{Broadway}
	\city{Sydney}
	\country{Australia}}\email{Jian.Zhang@uts.edu.au}

\author{Qiang Wu}
\affiliation{%
	\institution{University of Technology Sydney}
	\streetaddress{Broadway}
	\city{Sydney}
	\country{Australia}}\email{Qiang.Wu@uts.edu.au}

\author{Yongshun Gong}
\affiliation{%
	\institution{JD AI Research and \\University of Technology Sydney}
	\city{Beijing}
	\country{China}}\email{gongyongshun@jd.com}
\author{Jinfeng Yi}
\affiliation{%
	\institution{JD AI Research}
	\city{Beijing}
	\country{China}}\email{yijinfeng@jd.com}

\author{Christina Kirsch}
\affiliation{%
	\institution{Sydney Trains-Operational Technology}
	\city{Sydney}
	\country{Australia}}\email{Christina.Kirsch@transport.nsw.gov.au}
%
\renewcommand{\shortauthors}{Zhibin, et al.}

%
\begin{abstract}
Railway points are among the key components of railway infrastructure. As a part of signal equipment, points control the routes of trains at railway junctions, having a significant impact on the reliability, capacity, and punctuality of rail transport. Meanwhile, they are also one of the most fragile parts in railway systems. Points failures cause a large portion of railway incidents. Traditionally, maintenance of points is based on a fixed time interval or raised after the equipment failures. Instead, it would be of great value if we could forecast points' failures and take action beforehand, minimising any negative effect. To date, most of the existing prediction methods are either lab-based or relying on specially installed sensors which makes them infeasible for large-scale implementation. Besides, they often use data from only one source. We, therefore, explore a new way that integrates multi-source data which are ready to hand to fulfil this task. We conducted our case study based on Sydney Trains rail network which is an extensive network of passenger and freight railways. Unfortunately, the real-world data are usually incomplete due to various reasons, e.g., faults in the database, operational errors or transmission faults. Besides, railway points differ in their locations, types and some other properties, which means it is hard to use a unified model to predict their failures. Aiming at this challenging task, we firstly constructed a dataset from multiple sources and selected key features with the help of domain experts. In this paper, we formulate our prediction task as a multiple kernel learning problem with missing kernels. We present a robust multiple kernel learning algorithm for predicting points failures. Our model takes into account the missing pattern of data as well as the inherent variance on different sets of railway points. Extensive experiments demonstrate the superiority of our algorithm compared with other state-of-the-art methods. 
\end{abstract}

%
%
\begin{CCSXML}
	<ccs2012>
	<concept>
	<concept_id>10002951.10003227.10003241.10003244</concept_id>
	<concept_desc>Information systems~Data analytics</concept_desc>
	<concept_significance>500</concept_significance>
	</concept>
	</ccs2012>
\end{CCSXML}

\ccsdesc[500]{Information systems~Data analytics}

%
\keywords{railway points, multiple kernel learning, missing data, failure prediction}

\maketitle

\section{Introduction}
Railway points are a kind of mechanical installations allowing railway trains to be guided from one track to another. They are among the key components
of railway infrastructure.

A railway junction is controlled jointly by one or more ends of points. They work together to control the routes of trains. In this paper, we use the term "a set of railway points" to indicate the entire mechanism in a railway junction. 

Apart from delay and cancellation of trains, failure of points can also cause severe economic loss and casualties. Railway points count for almost half of all train derailments in the UK \cite{ishak2016safety}. On the morning of 12 December, 1988, Clapham Junction rail crash \footnote{\url{https://en.wikipedia.org/wiki/Clapham_Junction_rail_crash}} killed 35 people, and injured 484 people. More than 20\% of incidents in Sydney Trains rail network were caused by points failures. Maintaining railway points safe, and forecasting the incoming failure are vital tasks for reliable rail transportation.

Routine maintenance is usually performed on railway points to ensure the correctness and reliability of them. Such work is done by field engineers to inspect and test the equipment at a fixed time interval. However, this strategy cannot catch the rapid change of equipment status. For example, when extreme weather occurs, points often degrade faster than usual. As a result, they are more likely to fail soon. Instead of relying on passive routine maintenance, we could benefit more from predictive maintenance - which flexibly arranges the maintenance work according to the running condition of equipment. 

Forecasting the failures is a critical step in predictive maintenance. Some research has been conducted on this topic \cite{camci2016comparison,garcia2010railway,oyebande2002condition,tao2015intelligent,yilboga2010failure}. Delicate sensors usually serve as data collectors for voltages, currents and forces in related work. Installation of sensors incurs costly labour and material expenses, as well as the possibility of sensor malfunction. Adding sensors for in-service equipment would also induce disruption to traffic. This is especially unacceptable for a large and busy rail network. These make the prediction with sensors' data expensive, or even infeasible. On the contrary, one can easily collect heterogeneous data from other sources such as weather, movement logs, and equipment details without an additional hardware upgrade.

Gathering available data from multiple sources enriches our knowledge on the working status of points. However, this also brings extra problems. Firstly, data collected from different sources are often in incompatible formats, and they play different roles in revealing the condition of equipment. Secondly, we are not guaranteed that data are always intact - even for a single source. Actually, in most case, we can only feed incomplete data into our model. Besides, our data were collected upon 350 sets of railway points. They are possibly located in a rural area, city centre, or from a different point of view, bridges, tunnels. They can also be of various types and made by different manufacturers. These add up to the difficulties in designing models. To summarise, we are faced with three main challenges here:
\begin{itemize}
\item How to combine information from multiple sources efficiently and effectively?
\item How to deal with missing data?
\item How to consider the distinct and shared properties between different sets of railway points simultaneously?                   
\end{itemize}

To address these challenges, we proposed a novel multiple kernel learning algorithms. Our method was developed based on the multiple kernel learning framework \cite{gonen2011multiple}. Multiple kernel learning has attracted much attention over the last decade. It has been regarded as a promising technique for combining multiple data channels or feature subsets \cite{xu2010simple}, which exactly meets our requirements.
We applied different kernel mapping functions on our data from different sources. Besides, we also concatenated all the data to form a kernel so that the inter-source correlations could be found. An adaptive kernel weight determined by both properties of an individual set of railway points and the missing pattern of data makes our model robust, effective and unique. The contributions of this paper can be shown in the following aspects:
\begin{itemize}
\item We provide a universal framework to predict points' failure with multi-source data. Our data are easy to obtain for most of the rail networks over the world without a hardware upgrade, and thus could be used in many other rail networks.  
\item Our work firstly introduces missing pattern adaptive kernel weight into existing multiple kernel learning framework. 
\item With a sample adaptive kernel weight, our model can capture the distinct and share properties of different railway points.
\item We developed an optimisation algorithm to optimise the proposed model. Through random feature approximation together with mini-batch gradient descent, the proposed method can be applied on large datasets. 
\item We conducted experiments on a real-world dataset collected from a wide range of railway points over three years. The results clearly show the effectiveness of our model.
\end{itemize}

The rest of this paper is organised as follows. Section 2 presents the related work. In Section 3, we describe our data and application. The proposed adaptive multiple kernel learning is detailed in Section 4. The experiment results are shown in Section 5. Last we conclude our work in Section 6.
\section{RELATED WORK}
We give a brief introduction to failure prediction of railway points and the multiple kernel learning (MKL) algorithm.
\subsection{Failure Prediction of Railway Points}
Knowing that railway points directly affect the capacity and reliability of rail transport, some research has been conducted on failure prediction of railway points \cite{camci2016comparison,garcia2010railway,oyebande2002condition,tao2015intelligent,yilboga2010failure}. Sensor data such as voltages, currents and forces were widely used in these works. They were collected in laboratories or from site sensors. These data would require a high sampling rate and lead to difficulties in both transmission and storage. Despite the success shown in these methods, they are impractical in real application.   

Few works explored the prediction task with data from another source. Weather plays a significant role in the probability of failure \cite{hassankiadeh2011failure}, and has been used to predict the total number of failed turnout systems in a railway network \cite{wang2017bayesian}. Note that this work could not locate the exact fault railway points, it only estimates the total number of failures in a large system. Apart from weather data, equipment logs are also valuable information for foreseeing the failures of related equipment \cite{sipos2014log}. Logs can be generated by sensors, software applications and even maintenance records \cite{li2018field}, reflecting the working condition of a piece of equipment in a different view. In \cite{li2018field}, maintenance logs are used to forecast the failure between two scheduled maintenance.

Many of above-mention methods used support vector machines (SVM) \cite{chang2011libsvm} for their models. They mainly focused on data from one source. A natural extension is to use multiple kernel learning to formulate our multi-source problem, and level up the performance. 

\begin{figure*}[t]
    \centering
    \includegraphics[width=\linewidth]{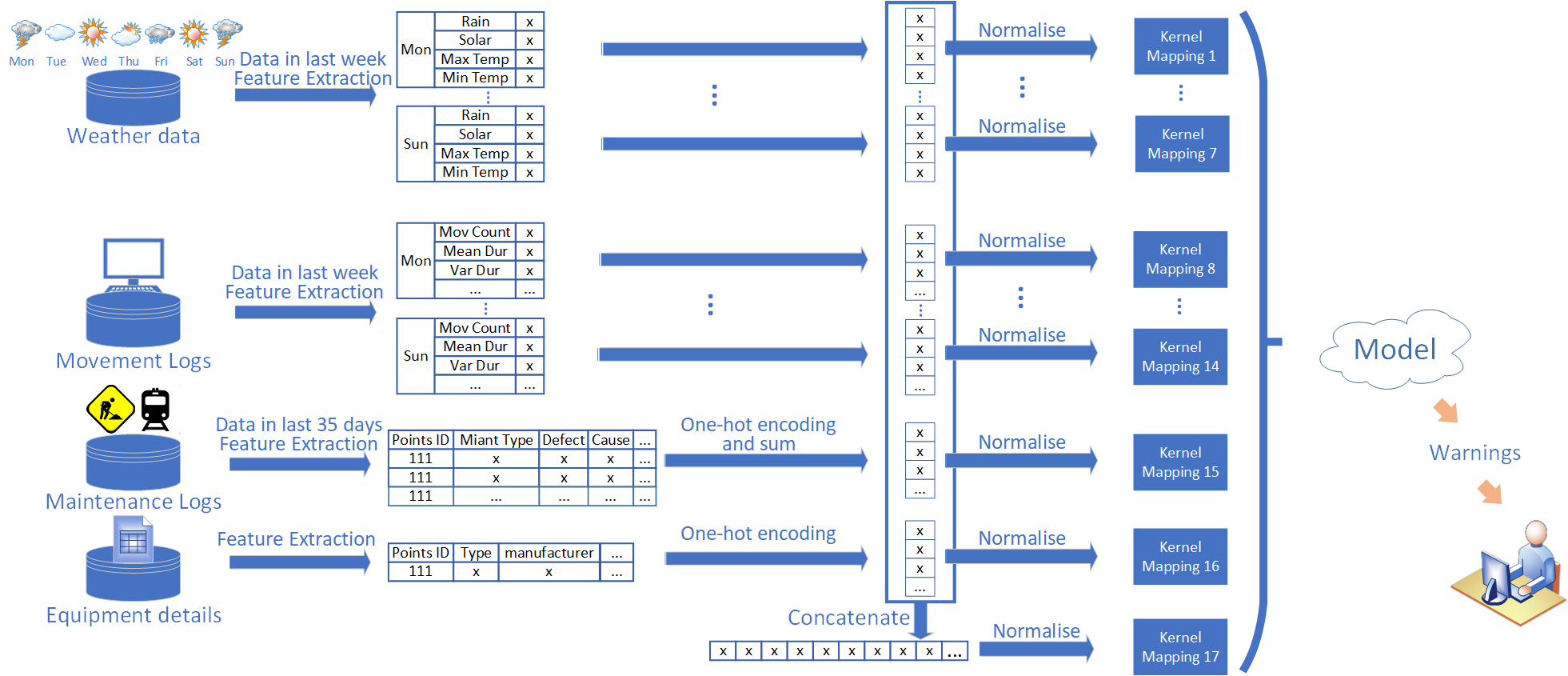}
    \caption{Workflow of our method.}\label{workflow}
\end{figure*}
\subsection{Multiple Kernel Learning}
Similar to deep neural networks, functions defined in reproducing kernel Hilbert space (RKHS) can model highly nonlinear relationship. MKL further takes the advantages of such functions by combining them wisely. Compared to deep neural networks, MKL enjoys better interpretability while requires less training data, which is more in line with our fundamental requirements.

MKL searches for an optimal combination of kernel functions to maximise a generalised performance measure. It has been widely used in various regression and classification tasks \cite{bucak2014multiple,althloothi2014human,yeh2011multiple,liu2014multiple,yang2012group}. 

For sample $\mathbf{x}_i=[\mathbf{x}^{(1)^ \top}_i,\mathbf{x}^{(2)^ \top}_i, \cdots ,\mathbf{x}^ {(s)^\top}_i]^\top$ consists of $s$ feature subsets, by applying $s$ mapping functions to each subset, it takes the form of:
\begin{equation}
\phi(\mathbf{x}_i) = 
[\phi_1^\top (\mathbf{x}_i^{(1)}),\phi_2^\top (\mathbf{x}_i^{(2)}), \cdots , \phi_s^\top(\mathbf{x}_i^{(s)})]^\top,
\end{equation}
where $\{\phi_m(\cdot) \}_{m=1}^s$ denote feature mappings associated with $m$ pre-defined base kernels $\{\kappa_m(\cdot,\cdot)\}_{m=1}^s$. Given samples $\left\{(\mathbf{x}_i, y_i)\right\}_{i=1}^n$ with $y_i \in \{-1,+1\}$ the label for $\mathbf{x}_i$, commonly used MKL can be formulated as the following convex optimisation problem \cite{rakotomamonjy2008simplemkl}:
\begin{equation}
\begin{aligned} 
\min_{\{\boldsymbol\omega_m\}_{m=1}^s,b,\boldsymbol{\xi},\boldsymbol{\eta}\in\Delta} \;& \frac12\sum_{m=1}^s\left\|\boldsymbol\omega_m\right\|_2^2 + C\sum_{i=1}^n\xi_i,\\ 
s.t.\quad& y_i\left(\sum_{m=1}^s\sqrt{\eta_m}\boldsymbol\omega_m^\top \phi_m(\mathbf{x}_i^{(m)}) + b\right) \geq 1-\xi_i, \\
& \xi_i \geq 0,\quad i=1,2,...,n,
\end{aligned} \label{eq2}
\end{equation}
where $\left\|\cdot\right\|_2$ is the Euclidean norm for vectors. $\boldsymbol\omega_m$ is the weight vectors for mapped features $\phi_m(\mathbf{x}_i^{(m)})$. $\boldsymbol\eta$ contains the weights for combination of base kernels. For $L_1$-norm of kernel weights, $\Delta=\{\boldsymbol\eta \in \mathbb{R}_+^s:\sum_{m=1}^s \eta_m=1, \eta_m\geq 0\}$. $b$ is the bias term and $C$ is a regularisation parameter for $\boldsymbol{\xi}$ which consists of slack variables. 
The decision score of the classifier on a sample $\mathbf{x}$ is given by:
\begin{equation}
f(\mathbf{x})=\sum_{m=1}^s\sqrt{\eta_m} \boldsymbol\omega_m^\top\phi_m(\mathbf{x}^{(m)}) + b.
\end{equation}

Many variants of the MKL have been proposed to improve the accuracy of MKL algorithms. A natural extension is to change the $L_1$-norm constraint for kernel weights to $L_p$-norm as in \cite{kloft2009efficient}. Algorithms in \cite{kloft2011lp} further simplified the optimisation procedure by adopting a closed-form solution for kernel weights. In \cite{liu2014sample}, a binary vector was introduced for every sample to switched on/off base kernels. The optimisation problem was an integer linear programming problem. The work in \cite{gonen2008localized} put forward a localised MKL algorithm. They utilised a gating model for selecting the appropriate kernel function locally. A convex variant was presented in \cite{lei2016localized} and corresponding generalisation error bounds were provided.

Another branch of studies focuses on improving the efficiency and scalability of MKL. In \cite{sonnenburg2006large}, they worked on a special scenario that when feature maps were sparse and can be explicitly computed. Combined with chunking optimisation, they were able to deal with large volumes of data. The work in \cite{rakotomamonjy2014lp} improved the scalability of MKL through Nystrom methods to approximate the kernel matrices and used proximal gradient algorithm in optimisation. Some research was also developed for the situation when the number of kernels to be combined was very large \cite{afkanpour2013randomized}. Besides, many online methods for MKL were proposed recently \cite{shen2018online,shen2018online2,li2017triply,sahoo2014online}. Random feature approximation \cite{rahimi2008random} is popular among these methods.

\begin{figure*}[t]
    \centering
    \begin{subfigure}[t]{0.45\textwidth}
        \centering
        \includegraphics[width=\linewidth]{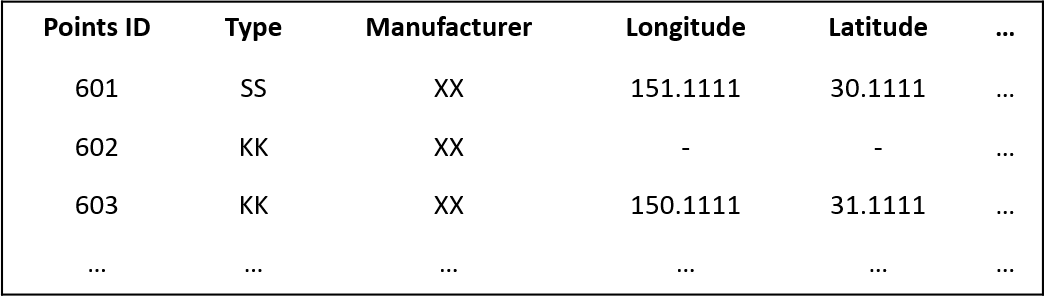}
        \caption{A piece of equipment details.}\label{eqtdtls}
    \end{subfigure}\quad\quad
    ~ 
    \begin{subfigure}[t]{0.45\textwidth}
        \centering
        \includegraphics[width=\linewidth]{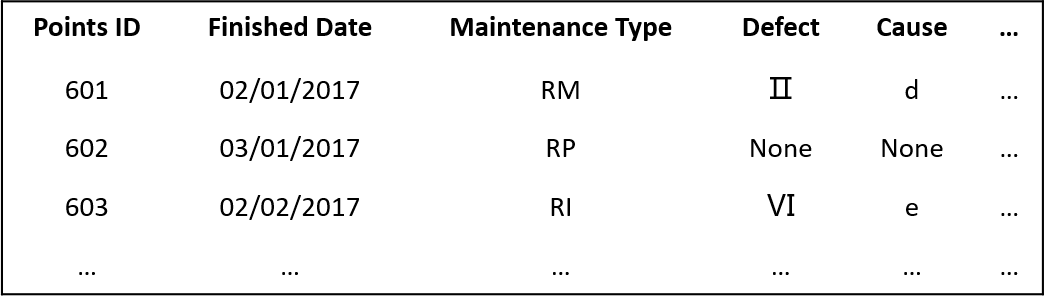}
        \caption{A piece of maintenance log.}\label{mlog}
    \end{subfigure}
    \\[2ex]
    \begin{subfigure}[t]{0.45\textwidth}
        \centering
        \includegraphics[width=\linewidth]{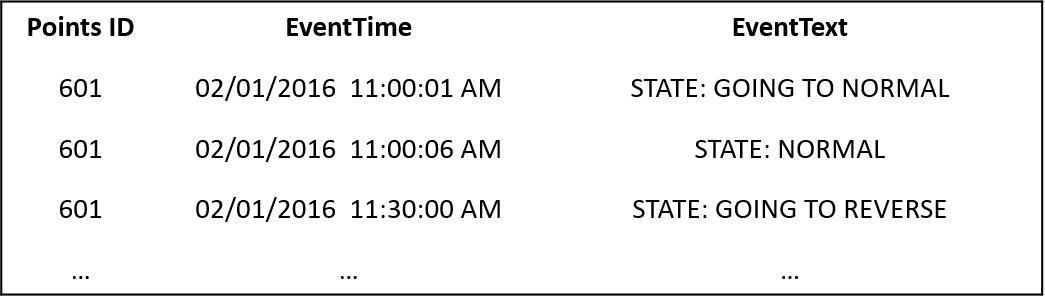}
        \caption{A piece of movement log.}\label{elog}
    \end{subfigure}\quad\quad
    ~ 
    \begin{subfigure}[t]{0.45\textwidth}
        \centering
        \includegraphics[width=\linewidth]{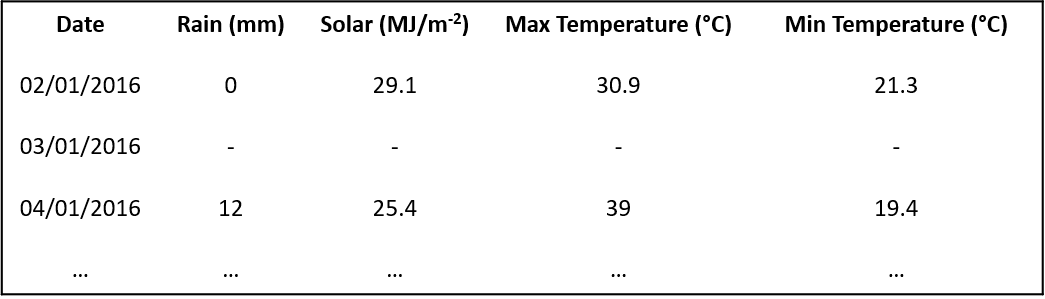}
        \caption{A piece of weather data.}\label{weather}
    \end{subfigure}
    \caption{A sample of our data.}
\end{figure*} 

Except for the work in \cite{liu2015absent}, most of the research on multiple kernel classification is based on the prerequisite that all kernels are complete, whereas in our problem, this is not true. The method in \cite{liu2015absent} cannot be scaled up to fit our dataset, and they actually treated different missing patterns equally in the test. We thus argue that this is insufficient. These inspire us to design a new algorithm that can handle a large dataset, and explore the benefits by not only dealing with different missing patterns accordingly but also treating each group of sample adaptively.

\section{Problem Description}
In this section, we describe our data and application. Figure \ref{workflow} shows the workflow of our method. 

\subsection{Data Description}
We collected railway points' equipment details, maintenance logs, movement logs and failure history from Sydney Trains database in a time range from 01/01/2014 to 30/06/2017. These data are collected from 350 sets of railway points spread in a large area. We also downloaded the weather data from Australia Bureau of Meteorology\footnote{\url{www.bom.gov.au/climate/data/}} of the same time span. Below we are going to introduce their formats and features.

\subsubsection{Infrastructure Failure Management System Database}
Infrastructure Failure Management System (IFMS) Database stores failures of assets in Sydney Trains with timestamps. We extracted points' failures as part of our ground truth. 

\subsubsection{Equipment Details}
Equipment details data record the detailed parameters of every set of railway points, including Points ID, Manufacturer, Type and so on. A piece of data is presented in Figure \ref{eqtdtls}. We use "-" to denote missing values. With the help of domain experts, we selected a subset of features from these columns, and they were all categorical variables. We would simply perform one-hot encoding with them. 

\subsubsection{Maintenance Logs}
Maintenance logs contain formatted historical maintenance logs of railway points. A subset of categorical features was extracted from them following advice by the domain experts. A piece of data is presented in Figure \ref{mlog}.

\subsubsection{Movement Logs}
Movement logs were automatically generated by Sydney Trains control system in a real-time manner. This system recorded states' changes of the railway points with timestamps in seconds. A piece of data is shown in Figure \ref{elog}. We only list some of the event types here. Failures are reported in logs as well. Some of the failures occurred in movement logs didn't appear in the IFMS database, for the reason that they recovered soon and didn't result in any significant incident. They were still real failures, and we included these failures in our ground truth. Sometimes workers were testing the points for preventative maintenance and this also generated failure logs. In this case, we ignore these failures to keep the ground truth clean.

\subsubsection{Weather}
Weather data were retrieved from the Australia Bureau of Meteorology. Our data were gathered from railway points spread in a large area, so weather conditions for them may vary. Our strategy was to download data from the nearest weather station according to the longitudes and latitudes provided by equipment details. Sometimes weather station would be closed for a while, and we were not able to find another station to substitute them in some situations. Some points are lack of geo-coordinates in Sydney Trains system. These cause the absence of weather data. Figure \ref{weather} shows a piece of weather data.

\subsection{Problem Formulation}
With data mentioned above in hand, we are going to make use of them to fulfil the prediction task. Essentially, this is a classification task. Since our data were generated from multiple sources, they came with different formats and sample frequencies. The two most important things are how we should aggregate our data from multiple sources and label them according to failure records.

Grouping and labelling data in a daily manner is an intuitive way. However, our data are highly imbalanced in label distribution. The number of days that failures occurred is about 4200, while our data include 454237 days summing over all railway points. This would produce a dataset contains only 0.9\% positive samples if we give a label "1" to failures. Such imbalanced dataset would deteriorate the performance of the classifier. 

Sydney Trains' train timetable shows cyclic patterns following calendar weeks \cite{gong2018network}, which will pose a periodic effect on our data as well. Therefore, we grouped our data according to calendar weeks. We gave label "1" to a week if any failure was recorded in IFMS or movement log of this week. As a result, our task is to predict whether there will be failures occur in any time of next week, depending on weather conditions, movement logs in this week and maintenance logs in a period of 35 days before next week. For maintenance logs, we extend the time range to 35 days since they were often performed based on a monthly interval. We would also incorporate equipment details, and in general, they are independent of time. Figure \ref{labelling} illustrates our data aggregation and labelling strategy. After some data cleaning, we finally generated 58833 samples including 3900 positive samples. 
\begin{figure}[t]
    \centering
    \includegraphics[width=\linewidth]{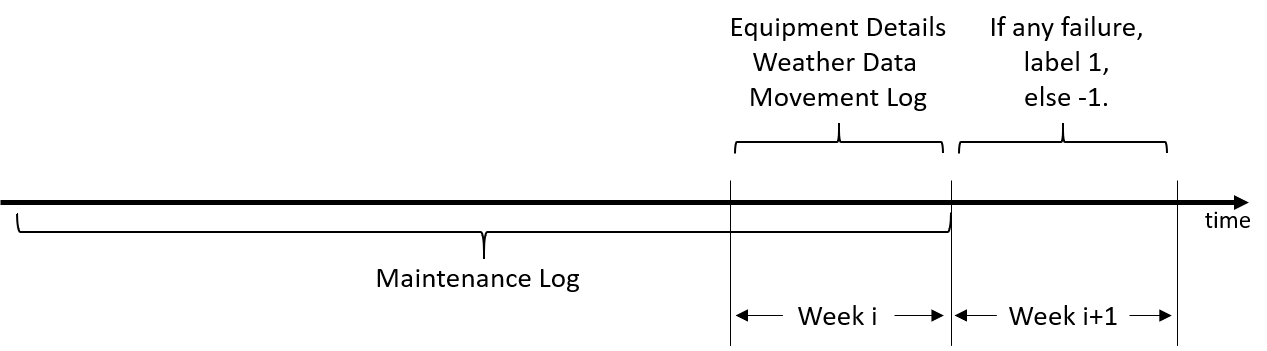}
    \caption{To forecast failures in week i+1, we use data from week i and maintenance logs in a 35-day interval before week i+1. }\label{labelling}
\end{figure}

Notice that in some cases we would lose the movement logs, for example, the influence of maintenance work. In these cases, we would only refer to logs in the IFMS database as failure indicators upon agreement with the domain experts.

\section{Methodology}
\subsection{Feature Extraction and Partition}
Although we have grouped our data according to the above-mentioned criterion, we need to flatten them further to form feature vectors. For equipment details and maintenance logs data, we selected some columns following the advice of domain experts. Then we performed one-hot encoding on these data. We summed up features if there are more than one maintenance records. For movement logs data, we extracted some statistical features for every day like mean of movements, variance of movements, count of movements and so on. Because there are 7 days per week, we would have 7 subsets of features for movement logs. Similarly, for weather data, we have 7 subsets for one week. This strategy could be seen in Figure \ref{workflow}. Such partition lets us easily handle the missing pattern in a daily format as we will introduce in detail in the next section. Table \ref{missing percentage} summarises missing percentages of our data after such feature partition. 

There are 16 feature subsets in total. By applying different kernel functions to different subsets, we can formulate our task as a multiple kernel learning problem for binary classification. In order to learn the interaction among feature subsets, we also concatenated all feature subsets to form a long vector and applied a kernel function on it. Finally, we would get 17 kernels as our inputs. We term these feature subsets \textbf{channels}.

The missing rates for each channel are not very high, but another fact is that 44\% of our data are either missing one channel or more. Therefore, it is imperative for us to build a model that is suitable for such data.

\subsection{Select Kernel Functions}
After applying one-hot encoding, features generated from equipment details and maintenance logs data were often very sparse. We thus directly used linear kernel for these two data channels as recommended in literature \cite{li2015data,fan2008liblinear}. For the remaining data channels consist of weather and movement logs of 7 days, we applied the commonly used radial basis function (RBF) kernels. In the rare case, some channels of a sample were only partially missing. If so, we filled the missing part with means.

\begin{table}[t]
    \centering
    \caption{Missing rates and dimensions of our data channels. 44\% of samples are missing at least one channel.}
    \label{missing percentage}
    \small
    \begin{tabular}{|c|c|c|c|}
        \hline
        \multicolumn{2}{|c|}{Data}                 & Missing Rate & Feature Dimension \\ \hline
        \multicolumn{2}{|c|}{Equipment Details}    & 0\%          & 450               \\ \hline
        \multicolumn{2}{|c|}{Maintenance Logs}     & 13\%         & 365               \\ \hline
        \multirow{7}{*}{Movement Logs} & Monday    & 5\%          & 30                \\ \cline{2-4} 
        & Tuesday   & 6\%          & 30                \\ \cline{2-4} 
        & Wednesday & 5\%          & 30                \\ \cline{2-4} 
        & Thursday  & 5\%          & 30                \\ \cline{2-4} 
        & Friday    & 7\%          & 30                \\ \cline{2-4} 
        & Saturday  & 8\%          & 30                \\ \cline{2-4} 
        & Sunday    & 10\%         & 30                \\ \hline
        \multirow{7}{*}{Weather}       & Monday    & 26\%         & 4                 \\ \cline{2-4} 
        & Tuesday   & 26\%         & 4                 \\ \cline{2-4} 
        & Wednesday & 26\%         & 4                 \\ \cline{2-4} 
        & Thursday  & 25\%         & 4                 \\ \cline{2-4} 
        & Friday    & 25\%         & 4                 \\ \cline{2-4} 
        & Saturday  & 25\%         & 4                 \\ \cline{2-4} 
        & Sunday    & 25\%         & 4                 \\ \hline
    \end{tabular}
\end{table}

\subsection{Missing Pattern Adaptive Multiple Kernel Learning}
To work with missing channels, a straightforward way is to learn separate kernel weights for each missing pattern. However, there can be $\sum_{m=1}^{s}C_s^m$ missing patterns if we have $s$ channels, so it is possible that the data cannot cover every pattern. Besides, the data for one pattern can be less and contain only one type of label. Such a strategy also ignores the relationship between missing patterns. A likely choice would be to adjust the kernel weights according to missing patterns.

In order to allow adaptive kernel combination, we firstly modify the decision function for a sample $\mathbf{x}$ with $s$ channels into following form:
\begin{equation}
f(\mathbf{x})=\sum_{m=1}^s\eta_m(\mathbf{x})\left\langle\boldsymbol\omega_m,\phi_m(\mathbf{x}^{(m)})\right\rangle + b, \label{eq6}
\end{equation}
with $\left\langle \cdot,\cdot \right\rangle$ denotes the inner product of vectors and
\begin{equation}
\eta_m(\mathbf{x}) = p_m\mathbf{v}_m^\top \sum_{j=1}^{2s} p_j\mathbf{v}_j, \label{eq5}
\end{equation}
where $\mathbf{p}=[p_1,p_2,\cdots,p_{2s}]^\top$ is a binary vector generated by one-hot encoding on the missing pattern for sample $\mathbf{x}$. We introduce $V=[\mathbf{v}_1,\mathbf{v}_2,...,\mathbf{v}_{2s}]\in \mathbb{R}^{k\times2s}$ with latent dimension $k$ to represent embedding matrix for missing patterns. By Eq. (\ref{eq5}), we express the kernel weights as a second order polynomial mapping from missing patterns $\mathbf{p}$ with the coefficients given by related inner product of vectors in V. We give a simple example here to explain how we generate $\mathbf{p}$. Assume we have 3 data channels but for a sample the second one is missing, then: 
\begin{equation}
\mathbf{p} = [1,0,1,0,1,0]^\top.
\end{equation}
The first and third "1" mean we have first and third feature subsets for this sample. The fifth "1" serves as a complementary feature for missing channel 2. By doing so, the absence of a channel would make its kernel weight zero and influence the kernel weights of other presented channels. 

The motivation behind this is that we want to collect information from the missing pattern of each sample. Eq. (\ref{eq5}) also indicates that the kernel weight for a channel is decided by "seeing" the existence of other channels' data.

With similar notation to Eq. (\ref{eq2}), the optimisation problem after introducing adaptive kernel weight can be expressed as:
\begin{equation}
\begin{aligned} 
\min_{\{\boldsymbol\omega_m\}_{m=1}^s,b,\boldsymbol{\xi},V} \;& \frac12\sum_{m=1}^s\left\|\boldsymbol\omega_m\right\|_2^2 + C_1\sum_{i=1}^n\xi_i + C_2\left\|V\right\|_F^2\\ 
s.t.\quad& y_i\left(\sum_{m=1}^s\eta_m(\mathbf{x}_i)\boldsymbol\omega_m^\top \phi_m(\mathbf{x}_i^{(m)}) + b\right) \geq 1-\xi_i, \\
& \xi_i \geq 0,\quad i=1,2,...,n, \label{eq9}
\end{aligned} 
\end{equation}
where $C_1$ and $C_2$ are two regularisation parameters.  $\left\| \cdot \right\|_F^2$ denotes the Frobenius norm. We add a regularisation term for $V$ to prevent it from being arbitrary scaled up due to the norm constraint on $\boldsymbol\omega_m$.
\begin{theorem}
Adopting an adaptive kernel weight in Eq.(\ref{eq5}) would guarantee a positive semi-definite kernel for MKL.
\end{theorem}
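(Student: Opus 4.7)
The plan is to interpret the theorem as the statement that the combined kernel implicitly induced by the decision function (\ref{eq6}) is positive semi-definite, and then to verify this via standard closure properties of PSD kernels. First I would identify the effective feature map: since (\ref{eq6}) is a single linear classifier in the augmented features $\tilde\phi(\mathbf{x}) = [\eta_1(\mathbf{x})\phi_1(\mathbf{x}^{(1)})^\top, \ldots, \eta_s(\mathbf{x})\phi_s(\mathbf{x}^{(s)})^\top]^\top$, the induced kernel is
\[
K(\mathbf{x},\mathbf{x}') \;=\; \sum_{m=1}^s \eta_m(\mathbf{x})\,\eta_m(\mathbf{x}')\,\kappa_m(\mathbf{x}^{(m)},\mathbf{x}'^{(m)}).
\]

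Next I would verify that each summand is PSD. For fixed $m$, the rank-one kernel $(\mathbf{x},\mathbf{x}')\mapsto \eta_m(\mathbf{x})\eta_m(\mathbf{x}')$ is PSD because, for any finite sample $\{\mathbf{x}_i\}$ with coefficients $\{c_i\}$,
\[
\sum_{i,j} c_i c_j \,\eta_m(\mathbf{x}_i)\,\eta_m(\mathbf{x}_j) \;=\; \left(\sum_i c_i\,\eta_m(\mathbf{x}_i)\right)^2 \;\geq\; 0.
\]
Each base kernel $\kappa_m$ is PSD by assumption, so by the Schur (Hadamard) product theorem the pointwise product $\eta_m(\mathbf{x})\eta_m(\mathbf{x}')\kappa_m(\mathbf{x}^{(m)},\mathbf{x}'^{(m)})$ is PSD. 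Since a finite sum of PSD kernels is PSD, the combined $K$ is PSD.

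The only genuine subtlety is pinning down what "the kernel for MKL" refers to, since the theorem is stated informally; once the effective kernel is extracted from the decision function, the proof reduces to two elementary facts—PSD-ness of outer-product kernels and Schur's theorem on Hadamard products—so I do not anticipate a substantive obstacle. It is worth emphasising that no sign constraint on $\eta_m(\mathbf{x})$ is needed, which is important because (\ref{eq5}) does not a priori force $\eta_m$ to be nonnegative: the square-of-sums argument for the outer-product factor is indifferent to the sign of the scalar $\eta_m(\mathbf{x})$.
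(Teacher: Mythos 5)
Your proposal is correct and takes essentially the same route as the paper: the paper derives the combined kernel matrix $K_\eta$ with entries $\sum_m \eta_m(\mathbf{x}_i)\eta_m(\mathbf{x}_j)K_m(i,j)$ via the dual of the primal problem (for fixed $V$), whereas you read the same kernel off the augmented feature map, and both arguments then conclude by applying the Schur product theorem to the Hadamard product of the rank-one PSD matrix $[\eta_m(\mathbf{x}_i)\eta_m(\mathbf{x}_j)]_{ij}$ with each base kernel matrix and summing. Your closing observation that no sign constraint on $\eta_m$ is needed is accurate and is implicit in the paper's rank-one factorisation as well.
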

\begin{proof}
For fixed $V$, one can obtain the dual form of Eq. (\ref{eq9}):
\begin{equation}
\max_{\boldsymbol\alpha\in\mathcal{Q}} \mathbf{1}^\top \boldsymbol\alpha-\frac12(\boldsymbol\alpha \circ \mathbf{y})^\top K_\eta (\boldsymbol\alpha \circ \mathbf{y}),
\end{equation}
where $\circ$ denotes element-wise product of vectors. $\mathbf{1}$ is a vector of all ones and $\mathcal{Q} = \left\{ \boldsymbol\alpha \in \mathbb{R} ^n : \boldsymbol\alpha ^ { \top } \mathbf { y } = 0,0 \leq \boldsymbol\alpha \leq C_1 \right\}$. $K_\eta$ is given by:
\begin{equation}
K_\eta = \sum_{m=1}^s \left(\left(\left( V^\top VP\right)\odot P\right)^\top\mathbb{I}_m\mathbb{I}_m^\top\left(\left(V^\top VP\right) \odot P\right)\right)\odot K_m, \label{eq10}
\end{equation}
where $\odot$ stands for the Hadamard product. $P=[\mathbf{p}_1,\mathbf{p}_2,\cdots,\mathbf{p}_n]$ with each column vector $\mathbf{p}_i\in\{0,1\}^{2s}$ denotes the missing pattern for sample $i$. $\mathbb{I}_m$ is a length-$2s$ indication vector with only $m$-th element 1. $\{K_m\}_{m=1}^s$ is the kernel matrix related to mapping $\{\phi_m(\cdot)\}_{m=1}^s$. Following Schur product theorem \cite{zhang2006schur}, $K_\eta$ is surely positive semi-definite. 
\end{proof}
Theorem 4.1 shows the correctness of our adaptive kernel weight in theory, but this problem is hard to solve in dual form because of the complicated form of $K_\eta$ in Eq. (\ref{eq10}).

\subsection{Sample Adaptive Multiple Kernel Learning}
If we train a unified model for all sets of railway points, we will possibly ignore some peculiarities of them even though we have included equipment details as features. Training separate models for each set of railway points performed even worse as we observed in initial experiments. These motivated us to modify our model so that it could be adjusted to fit each set of railway points. We revised the kernel weight in Eq.(\ref{eq5}) into the following format for a sample $\mathbf{x}$:
\begin{equation}
\eta_m(\mathbf{x}) = p_m\mathbf{v}_m^\top \sum_{j=1}^{2s} p_j\mathbf{v}_ja_j, \label{eq12}
\end{equation}
where we add a new vector $\mathbf{a}=[a_1,a_2,\cdots,a_{2s}]^\top$ to represent unique features of the set of railway points that generated sample $\mathbf{x}$.  

Related Eq. (\ref{eq12}) with Eq. (\ref{eq6}), we observe that the term $p_m$ could be omitted from Eq. (\ref{eq12}) if we set the mapping $\phi_m(\cdot)$ to a zero vector for missing channels. Thus we omit $p_m$ for simplicity of notation. If we have $T$ sets of railway points, then we will introduce $A=[\mathbf{a}_1,\mathbf{a}_2,\cdots,\mathbf{a}_T]\in\mathbb{R}^{2s\times T}$ with $T$ the total number of sets of railway points. Each column vector in $A$ stands for features of a set of railway points. Let $q(\mathbf{x}_i)$ be the mapping which maps $\mathbf{x}_i$ to index of railway points that generated $\mathbf{x}_i$. Eq. (\ref{eq12}) can be written into matrix form for sample $\mathbf{x}_i$:
\begin{equation}
\eta_m(\mathbf{x}_i) = \mathbb{I}_m^\top V^\top V \left( \mathbf{p}_i \circ \mathbf{a}_{q(\mathbf{x}_i)} \right), \label{eq13}
\end{equation} 

With $\eta_m(\mathbf{x}_i)$ given in Eq. (\ref{eq13}), corresponding optimisation problem becomes:
\begin{equation}
\begin{aligned} 
\min_{\{\boldsymbol\omega_m\}_{m=1}^s,b,\boldsymbol{\xi},V,A}  \; & \frac12\sum_{m=1}^s\left\|\boldsymbol\omega_m\right\|_2^2 + C_1\sum_{i=1}^n\xi_i+ C_2\left\|V\right\|_F^2\\ 
&+ C_3\left\|A - \mathbf{1}_{2s\times T}\right\|_F^2,\\ 
s.t.\quad& y_i\left(\sum_{m=1}^s\eta_m(\mathbf{x}_i)\boldsymbol\omega_m^\top \phi_m(\mathbf{x}_i^{(m)}) + b\right) \geq 1-\xi_i, \\
\;& \xi_i \geq 0,\quad i=1,2,...,n,
\end{aligned} \label{eq14}
\end{equation}
where $C_3$ is a regularisation parameter and $\mathbf{1}_{2s\times T}$ is a matrix of shape $2s\times T$ containing all ones. Notice that when A is a matrix of all ones, Eq. (\ref{eq12}) reduce to Eq.(\ref{eq5}). In other words, when $C_3$ is large enough, the two models would be equivalent. This regularisation term ensures an appropriate variance of models among different sets of railway points. One can also proof that such adaptive weights also retain a positive semi-definite kernel. 

\subsection{Optimisation}
As mentioned before, Eq.(\ref{eq9}) and Eq.(\ref{eq14}) are hard to optimise in dual form. What's more, we cannot fit such large data into memory if we pre-compute those 17 kernel matrices. Thanks to the random feature (RF) approximation \cite{rahimi2008random}, we can take an explicit form of mapped features hence avoiding calculation of the kernel matrices. This also facilitates the optimisation in the primal, which is much simpler. Given $\mathbf{x}\in\mathbb{R}^d$ and a predefined parameter $D$, the mapped features associated with a RBF kernel could be approximated by:
\begin{equation}
\phi(\mathbf{x}) {=} \sqrt{\frac{1}{D}}\left[\sin \left( \mathbf{g} _ { 1 } ^ { \top } \mathbf { x } \right) , \cos \left( \mathbf {g} _ { 1 } ^ { \top } \mathbf { x } \right) {, \cdots , }\sin \left( \mathbf{g} _ D ^ { \top } \mathbf { x } \right) , \cos \left( \mathbf{g} _ D ^ { \top } \mathbf { x } \right) \right]^\top,
\end{equation} 
where the entries of $G = [\mathbf{g}_1,\cdots,\mathbf{g}_D]\in\mathbb{R}^{d\times D} $ are drown i.i.d. from a Gaussian distribution $\mathcal{N}(0,\sigma^{-2})$ with $\sigma$ bandwidth of the RBF kernel. Many variants of RF approximation have been proposed in the literature. Here we implement the Fastfood \cite{le2013fastfood} for its simplicity and efficiency in memory usage.

\begin{algorithm}[t]
    \caption{Training Procedure by Mini-batch Gradient Descent}\label{a1}
    \begin{algorithmic}[1]
        \STATE 
        \textbf{Input:} Dataset $\mathcal{X}$ collected from $T$ sets of railway points. Latent dimension $k$ for $V$. Number of random features $\{d_m\}_{m=1}^s$ for each kernel. Hyper-parameters $C_1$, $C_2$, $C_3$. Learning rate $\beta$. Batch size $h$. The number of batches $H= \lfloor\frac{n}{h}\rfloor$.\\
        \textbf{Initialise:} $\{\boldsymbol\omega_m\}_{m=1}^s=\mathbf{0}$. $b=0$. $A=\mathbf{1}_{2s\times T}$. $V$ with values sampled from a uniform distribution $\mathcal{U}(0,1)$. 
        \FOR{$Epoch = 0$ to $M$}
            \STATE Shuffle the samples in $\mathcal{X}$ randomly.
            \STATE Split $\mathcal{X}$ into batches $X_1,X_2,\cdots,X_H$.
            \FOR {$i = 1,2,\cdots,H$}
                \STATE Get the index set $\mathcal{I}$ for support vectors in $X_i$  
                \STATE Update $V$ with step-size $\beta$ and sub-gradient in Eq. (\ref{gv})
                \STATE Update $A$ with step-size $\beta$ and sub-gradient in Eq. (\ref{ga})
                \STATE Update $b$ with step-size $\beta$ and sub-gradient in Eq. (\ref{gb})
                \STATE Update $\{\boldsymbol\omega_m\}_{m=1}^s$ with step-size $\beta$ and sub-gradient in Eq. \text{\qquad}(\ref{gw}).
            \ENDFOR
        \ENDFOR
    \end{algorithmic}
\end{algorithm}

Our optimisation problem can be rewritten into following form with hinge loss $L(x,y) = max(0,1-xy)$:
\begin{equation}
\begin{aligned}
\min \mathcal { L } = &\frac12 \sum_{m=1}^s\left\| \boldsymbol\omega_m \right\|_2 ^ { 2 }\\
&+C_1\sum_{i=1}^n L \left(y _i, \sum_{m=1}^s \eta_m \left(\mathbf{x}_i \right) \left\langle \boldsymbol\omega_m, \phi_m (\mathbf{x}_i^{(m)} ) \right\rangle + b \right)\\
&+ C_2\left\|V\right\|_F^2 + C_3\left\|A - \mathbf{1}_{2s\times T}\right\|_F^2, \\
\text{w.r.t.} &\quad \{\boldsymbol\omega_m\}_{m=1}^s,b,V,A,
\end{aligned}
\end{equation}
with $\eta_m(\mathbf{x}_i)$ defined in Eq.(\ref{eq13}), we can calculate the sub-gradients regarding these variables and get:
\begin{equation}
\frac{\partial\mathcal{L}}{\partial\boldsymbol\omega_m}=\boldsymbol\omega_m - C_1 \sum_{i\in\mathcal{I}} y_i \mathbb{I}_m^\top V^\top V\left( \mathbf{p}_i \circ \mathbf{a}_{q(\mathbf{x}_i)} \right) \phi_m(\mathbf{x}_i^{(m)}), \label{gw}
\end{equation}
\begin{equation}
\begin{aligned}
\frac{\partial\mathcal{L}}{\partial V}{=} &{-}C_1V \! \sum_{i\in\mathcal{I}}\!\sum_{m=1}^s y_i \boldsymbol\omega_m^\top \phi_m(\mathbf{x}_i^{(m)}) \! \left( \mathbb{I}_m \! \left( \mathbf{p}_i \! \circ \! \mathbf{a}_{q(\mathbf{x}_i)} \right)^\top \!+\! \left(\mathbf{p}_i \!\circ\! \mathbf{a}_{q(\mathbf{x}_i)} \right)\! \mathbb{I}_m^\top\right)\\
& + 2C_2 V,
\end{aligned}\label{gv}
\end{equation}
\begin{equation}
\begin{aligned}
\frac{\partial\mathcal{L}}{\partial\mathbf{a}_t} = &-C_1\sum_{i\in\mathcal{I}\cap\mathcal{T}_t} \sum_{m=1}^s \left(y_i \boldsymbol\omega_m^\top\phi_m (\mathbf{x}_i^{(m)} ) V^\top V \mathbb{I}_m \right) \circ \mathbf{p}_i\\
& + 2C_3(\mathbf{a}_t-\mathbf{1}_{2s}),
\end{aligned}\label{ga}
\end{equation}
\begin{equation}
\frac{\partial\mathcal{L}}{\partial b} = -C_1\sum_{i\in\mathcal{I}}y_i, \label{gb}
\end{equation}
where $\mathcal{I}=\{i | 1-y_if(\mathbf{x}_i)>0\}$ is the index set for support vectors. $\mathcal{T}_t=\{i | q(\mathbf{x}_i)=t \}$ is the index set of samples generated by railway points $t$.

With gradients calculated as Eq. (\ref{gw}) - Eq. (\ref{gb}), we adopted Mini-batch
gradient descent in optimisation. We trained the models for 50 epochs with a constant learning rate $\beta=0.0001$ and batch-size 256. Using $d_m$ to denote the dimension of random features for $m$-th kernel mapping, the computational complexity for calculating the gradients is $O(\sum_{m=1}^s d_mh + s^2k)$, which depends linearly on batch-size $h$ and can be computed efficiently. We summarise the training process in Algorithm \ref{a1}.

\begin{table}[b]
 \centering
 \caption{Dataset summary.}
 \label{dataset}
 \resizebox{0.47\textwidth}{7mm}{
     \begin{tabular}{@{}ccccc@{}}
      \toprule
      Dataset & \#instances & \#failures & \#railway points & \#incomplete instances \\ \midrule
      Points\_All & 58833 & 3900 & 350 & 25942 \\
      Points\_Subset & 905 & 183 & 5 & 98 \\ \bottomrule
     \end{tabular}
 }
\end{table}
\section{Experiments}
Our data were collected from 350 sets of railway points from 01/01/2014 to 30/06/2017, together with corresponding weather data downloaded from Australia Bureau of Meteorology. There are 58833 samples including 3900 failures. We named this dataset \textbf{Points-All}. We also built a subset consists of data from 5 most "vulnerable" sets of railway points, i.e. those with most failure samples, and named it \textbf{Points-Subset}. These datasets are imbalanced in label distribution. We have tried to weight the classes in training but saw no performance gains, so we did not adopt such strategy. Table \ref{dataset} summarises the statistics of our datasets.

\subsection{Baselines, Evaluation Metrics and Parameter Setting}
To show the effectiveness of our approach, we conducted experiments on the following methods.
\begin{itemize}
    \item MKL-ZF is the $l_p$-norm MKL method solved by the algorithm in \cite{kloft2011lp} with absent channels filled by zeros. We conducted experiments for $p$ ranges in $[10^0,10^1,10^2,10^3,10^4]$.
    \item MKL-MF is similar to MKL-ZF but with absent channels filled by the averages.
    \item MVL-MKL firstly imputes the missing values by the method in \cite{xu2015multi}, and then applied $l_p$-norm MKL with the imputed data. \cite{xu2015multi} is a competitive method for filling incomplete data similar to our case, so we included it in our baselines. 
    \item Absent Multiple Kernel Learning (AMKL) \cite{liu2015absent} is a state-of-the-art method for MKL with missing kernels. We only compared with AMKL on Points-Subset because it cannot be scaled up to fit our Points-All dataset.
    \item Single Source Classifiers (SSC) are the classifiers applied to single source data. For weather and movement logs data, there are still 7 data channels for each source. We use our method MAMKL as the classifier. For maintenance logs, equipment details and the data channel formed by concatenating all features, we filled the missing channels with means, and then used kernel SVM \cite{chang2011libsvm} for classification because these data sources only consist of one channel. 
    \item Missing Pattern Adaptive MKL (MAMKL) is the method proposed in this paper with kernel weights given by Eq. (\ref{eq5}). 
    \item Sample Adaptive MKL (SAMKL)  is the method proposed in this paper with kernel weights determined by Eq. (\ref{eq12}). 
\end{itemize}
For fair of comparison, for all methods, we used RF approximation for RBF kernels, and we fixed the random seed to make them determined. As such, $l_p$-norm MKL could also be applied to our Points-All dataset without pre-computed kernels. 

We used Area Under Receiver Operating Characteristic Curve (AUROC) and Area Under Precision Recall Curve (AUPRC) as our performance metrics for all the methods. For all non-convex methods, we repeated them 10 times to report the results with means and standard deviations. 
\begin{table}[t]
    \centering
    \caption{Experiment results on Points-Subset dataset. Best results are bold and the second best are underlined. We report the results with means and standard deviations (mean$\pm$std) for non-convex methods.}
    \label{subset}
    \begin{tabular}{|c|c|c|c|}
        \hline
        \multicolumn{2}{|c|}{Methods} & AUROC & AUPRC \\ \hline
        \multirow{5}{*}{MKL-ZF} &$p=10^0$& 0.737 & 0.436 \\ \cline{2-4} 
        &$p=10^1$& 0.921 & 0.791 \\ \cline{2-4} 
        &$p=10^2$& 0.902 & 0.784 \\ \cline{2-4} 
        &$p=10^3$& 0.920 & 0.789 \\ \cline{2-4} 
        &$p=10^4$& 0.921 & 0.790 \\ \hline
        \multirow{5}{*}{MKL-MF} &$p=10^0$& 0.646 & 0.289 \\ \cline{2-4} 
        &$p=10^1$& 0.923 & 0.800 \\ \cline{2-4} 
        &$p=10^2$& 0.887 & 0.770 \\ \cline{2-4} 
        &$p=10^3$& 0.887 & 0.767 \\ \cline{2-4} 
        &$p=10^4$& 0.906 & 0.780 \\ \hline
        \multirow{5}{*}{MVL-MKL} &$p=10^0$& 0.655$\pm$0.002 & 0.292$\pm$0.002 \\ \cline{2-4} 
        &$p=10^1$& 0.852$\pm$0.008 & 0.783$\pm$0.005 \\ \cline{2-4}
        &$p=10^2$& 0.898$\pm$0.010 & 0.788$\pm$0.015 \\ \cline{2-4}
        &$p=10^3$& 0.873$\pm$0.006 & 0.788$\pm$0.005 \\ \cline{2-4}
        &$p=10^4$& 0.873$\pm$0.006 & 0.788$\pm$0.004 \\ \hline
        \multirow{5}{*}{SSC} & Movement Logs & 0.663$\pm$0.001 & 0.380$\pm$0.001 \\ \cline{2-4} 
        & Weather & 0.864$\pm$0.035 & 0.781$\pm$0.036 \\ \cline{2-4} 
        & Maintenance Logs & 0.667 & 0.301 \\ \cline{2-4} 
        & Equipment Details & 0.516 & 0.217 \\ \cline{2-4} 
        & All Concatenated & 0.669 & 0.376 \\ \hline
        \multicolumn{2}{|c|}{AMKL} & 0.736 & 0.463 \\ \hline
        \multicolumn{2}{|c|}{MAMKL} & \underline{0.942$\pm$0.005} & \underline{0.831$\pm$0.016} \\ \hline
        \multicolumn{2}{|c|}{SAMKL} & \textbf{0.947$\pm$0.007} & \textbf{0.840$\pm$0.011} \\ \hline
    \end{tabular}
\end{table}
\begin{table}[t]
    \centering
    \caption{Experiment results on Points-All dataset. Best results are bold and the second best are underlined. We report the results with means and standard deviations (mean$\pm$std) for non-convex methods. }
    \label{alldata}
    \begin{tabular}{|c|c|c|c|}
        \hline
        \multicolumn{2}{|c|}{Methods} & AUROC & AUPRC \\ \hline
        \multirow{5}{*}{MKL-ZF} & $p=10^0$ & 0.699 & 0.218 \\ \cline{2-4} 
        & $p=10^1$ & 0.691 & 0.199 \\ \cline{2-4} 
        & $p=10^2$ & 0.696 & 0.205 \\ \cline{2-4} 
        & $p=10^3$ & 0.690 & 0.196 \\ \cline{2-4} 
        & $p=10^4$ & 0.692 & 0.197 \\ \hline
        \multirow{5}{*}{MKL-MF} & $p=10^0$ & 0.698 & 0.223 \\ \cline{2-4} 
        & $p=10^1$ & 0.684 & 0.204 \\ \cline{2-4} 
        & $p=10^2$ & 0.687 & 0.204 \\ \cline{2-4} 
        & $p=10^3$ & 0.682 & 0.198 \\ \cline{2-4} 
        & $p=10^4$ & 0.668 & 0.176 \\ \hline
        \multirow{5}{*}{MVL-MKL}&$p=10^0$& 0.678$\pm$0.001 & 0.168$\pm$0.002\\ \cline{2-4} 
        &$p=10^1$& 0.671$\pm$0.001 & 0.159$\pm$0.001 \\ \cline{2-4} 
        &$p=10^2$& 0.670$\pm$0.001 & 0.159$\pm$0.001 \\ \cline{2-4} 
        &$p=10^3$& 0.672$\pm$0.002 & 0.158$\pm$0.001 \\ \cline{2-4} 
        &$p=10^4$& 0.674$\pm$0.002 & 0.159$\pm$0.003 \\ \hline
        \multirow{5}{*}{SSC} & Movement Logs & 0.546$\pm$0.010 & 0.093$\pm$0.001 \\ \cline{2-4} 
        & Weather & 0.677$\pm$0.003 & 0.197$\pm$0.008 \\ \cline{2-4} 
        & Maintenance Logs & 0.567 & 0.098 \\ \cline{2-4} 
        & Equipment Details & 0.517 & 0.085 \\ \cline{2-4} 
        & All Concatenated & 0.622 & 0.133 \\ \hline
        \multicolumn{2}{|c|}{MAMKL} &\underline{0.721$\pm$0.002}&\underline{0.261$\pm$0.009}\\ \hline
        \multicolumn{2}{|c|}{SAMKL} &\textbf{0.734$\pm$0.002}&\textbf{0.270$\pm$0.002}\\ \hline
    \end{tabular}
\end{table}
For the Points-All dataset, we split it into 60\% training data, 20\% validation data and 20\% test data. The linear kernel was used for the data channels from equipment details and maintenance logs. We set same bandwidth for RBF kernels on 7 data channels from weather data. The bandwidth is chosen from $[\sigma^{-2},\sigma^{-1},\sigma^0,\sigma^1,\sigma^2]$ according to the AUROC on validation data using SVM with sum of these 7 kernels as input. $\sigma$ is the standard deviation of weather data. The same criterion was adopted to select the parameter of RBF kernels for 7 data channels from movement logs and 1 data channel from concatenated features. The dimensions of RFs for approximating RBF kernels were set to 1024, 2048 and 2048 for movement logs, weather and concatenated features respectively. All other parameters were chosen from some appropriately large ranges based on the AUROC of related methods on validation data. For Points-Subset, we randomly selected 80\% data as training set and the remaining 20\% as the test set. Parameters for them were decided by 5-fold cross-validation on the training set. 

\subsection{Results on Points-Subset Dataset}
Table \ref{subset} shows the experiment results on Points-Subset dataset. $l_p$-norm MKL got inferior results when $p=1$, for the reason that they tended to find a sparse combination of kernels. This means our data channels carry the complementary information, so only use some of them could not produce a good result. Experiment results on SSC verify our argument that only use data from one source is not enough. 
The prefilling method did not perform best, because filling the missing data in advance and used them in training will possibly introduce another source of error. Although AMKL appropriately takes into account the missing pattern in trainings, it keeps a fixed kernel weight in testing. Besides, it is designed for $l_1$-norm MKL, so it did not perform well in our experiments. 
It is clear that our method outperforms other baselines in terms of both AUROC and AUPRC. We attribute the improvement to the combination of multi-source data and the sample adaptive kernel weights.

\subsection{Results on Points-All Dataset}
Table \ref{alldata} shows the experiment results on Points-All dataset. By training on all data, we also included some sets of railway points with only a few failure cases. The proportion of incomplete samples is also higher than that in Points-Subset. These added up to our difficulties in predicting the failures. As in Table \ref{alldata}, results with $p=1$ is often better. This means traditional MKL cannot fully exploit the merits of multiple kernels. Our method still can beat other baselines on both AUROC and AUPRC, and see improvement compared to SSC. Notice that SAMKL is much better than MAMKL in this dataset, which verifies the effectiveness of sample adaptive kernel weight. This could guarantee a reliable warning for failures predicted by our model.  

For each set of railway points, the number of samples is usually less than 180. Only several failures are observed for some points. We also trained many classifiers each for one set of railway points, but the results were unsatisfactory, so we did not list them here. 

\section{Conclusion}
We have designed a novel approach for combining incomplete multi-source data to predict the failure of railway points. It was developed based on the multiple kernel learning framework but went a step further by exploiting the missing patterns and sample-specific features. With the involvement of domain experts, we grouped our data weekly and split each week into a daily format to form 17 data channels and built 17 kernels. In this format, we can express the missing patterns of samples clearly. After that, we put forward a missing pattern adaptive MKL to leverage the information carried by missing patterns. We also considered the distinct properties of each set of railway points, and further improved the prediction results by our SAMKL algorithm. Experiments show that our model can output reliable warnings for railway points, and can predict the failures precisely for those frequently failed points.

In the future, we are going to apply more kernel functions on a single data channel, and reduce the resulting extra optimisation time by parallel computing through GPU.     

\begin{acks}
The authors greatly appreciate the financial support from the Rail Manufacturing Cooperative Research Centre (funded jointly by participating rail organisations and the Australian Federal Government's Business Cooperative Research Centres Program) through Project R3.7.2 - Big data analytics for condition-based monitoring and maintenance.
\end{acks}

\bibliographystyle{ACM-Reference-Format}
\bibliography{sample-base}

\end{document}